\newtheorem{proposition}{Proposition}
\newtheorem{lemma}{Lemma}
\title{Test-Time Steering for Lossless Text Compression via \\ Weighted Product of Experts}
\author{
 \textbf{Qihang Zhang\textsuperscript{1,2}}\quad \quad
 \textbf{Muchen Li\textsuperscript{1,2}}\quad \quad
 \textbf{Ziao Wang\textsuperscript{4}}\quad \quad
 \\
 \textbf{Renjie Liao\textsuperscript{1,2,3}}\quad \quad
 \textbf{Lele Wang\textsuperscript{1}}
\\
\\
 \textsuperscript{1}University of British Columbia \quad
 \textsuperscript{2}Vector Institute for AI 
 \\
 \textsuperscript{3}Canada CIFAR AI Chair
 \quad
 \textsuperscript{4}University of Michigan
\\
{\footnotesize
 \texttt{\{qihangz, rjliao, lelewang\}@ece.ubc.ca}
}\\
{\footnotesize
 \texttt{muchenli@cs.ubc.ca, ziaow@umich.edu}
}
}
\DeclareRobustCommand\onedot{\futurelet\@let@token\@onedot}
\def\@onedot{\ifx\@let@token.\else.\null\fi\xspace}
\def\eg{\emph{e.g}\onedot} 
\def\ie{\emph{i.e}\onedot}
\begin{document}
\maketitle
\begin{abstract} 
Lossless compression techniques are crucial in an era of rapidly growing data. 
Traditional universal compressors like \texttt{gzip} offer low computational overhead, high speed, and broad applicability across data distributions. 
However, they often lead to worse compression rates than modern neural compressors, which leverage large-scale training data to model data distributions more effectively. 
Despite their advantages, neural compressors struggle to generalize to unseen data. 
To address this limitation, we propose a novel framework that performs Test-Time Steering via a Weighted Product of Experts (wPoE). 
At inference, our method adaptively combines a universal compression model with a pretrained neural language model, ensuring the compression rate is at least as good as that of the best individual model. 
Extensive experiments demonstrate that our approach improves the performance of text compression without requiring fine-tuning. 
Furthermore, it seamlessly integrates with any autoregressive language model, providing a practical solution for enhancing text compression across diverse data distributions. \textit{The code and all other details are available on our blog: \href{https://qihang-zhang.com/Learning-Sys-Blog/2025/10/15/weighted-product-of-experts.html}{blog for Weighted Product of Experts}.}
\end{abstract}
\section{Introduction}

Lossless text compression is a long-standing research challenge. Various general-purpose compression algorithms, \eg, \texttt{gzip}~\cite{gzip} and \texttt{LZMA2}~\cite{lzma2}, have been proposed to efficiently encode text data. 
These algorithms are widely adopted due to their low computational overhead, high efficiency, and consistent performance across diverse data distributions.

Shannon's source coding theorem \cite{shannon1948mathematical} establishes that the lower bound of average code length for a given dataset is determined by its Shannon entropy. 
While general-purpose compression algorithms are practical and efficient, their inability to model the underlying data distribution limits their ability to approach this bound. 
To overcome this limitation, neural network-based models trained to minimize the Kullback-Leibler (KL) divergence between the data distribution and the model's distribution have emerged as powerful alternatives~\citep{NNCP,lmic,llmzip}.

Empirical evidence suggests that neural compression techniques often outperform traditional methods, particularly when the input text closely resembles the data used for training. 
Large Language Models (LLMs)~\citep{gpt2,gpt4,llama3} have further advanced this capability, with studies such as \cite{lmic} and \cite{llmzip} demonstrating that leveraging LLMs can significantly enhance compression performance.

Despite these successes, neural compression models often struggle to generalize to  unseen data and require intensive computation~\citep{heurteldepeiges2024compressionpretrainedtransformersstudy}. In contrast, classical universal compressors, while not always achieving the highest compression ratios, maintain stable performance across diverse datasets without requiring training~\citep{lmic,heurteldepeiges2024compressionpretrainedtransformersstudy}. Additionally, these traditional methods typically have lower computational overhead.
\citet{mackay2003information,lmic} frame lossless compression and language modeling as two aspects of the same underlying principle, emphasizing the importance of developing more adaptable language models to improving text compression.

To address this limitation, we propose a novel framework within the Test-Time Steering (TTS) setting. 
Unlike methods that require fine-tuning on the target domain, TTS aims to adapt a pre-trained model to the target distribution, typically with a lightweight cost and a limited number of observed data points.
Specifically, we introduce a \emph{Weighted Product of Experts (wPoE)} method that dynamically combines a universal compression model with a pre-trained neural model. 
By selectively modulating each expert’s contribution based on the input text’s characteristics, \textit{wPoE} effectively adapts to shifts in data distribution. 

In summary, our contributions are as follow:
\begin{itemize}
    \item \textbf{Efficient Test-Time Steering for Generalizable Text Compression:}\
    We develop a weighted product of experts (wPoE) framework that combines a training-free universal compressor, \ie, Naive Bayes with Laplace smoothing, with an autoregressive language model to enable generalizable text compression at test time. 
    The optimal expert weights are efficiently determined by optimizing a single scalar using one data point from the text to be compressed at test time.
    \item \textbf{Theoretical Guarantee for wPoE:}\
    We provide a theoretical proof that wPoE performs at least as well as the best individual expert in text compression.
    \item \textbf{Improved Empirical Performance on Text Compression:}\
    Since our framework seamlessly integrates with any autoregressive language model without requiring fine-tuning, we conduct extensive experiments across various text datasets and LLMs. 
    We demonstrate that our method consistently improves compression rates across a wide range of pretrained LLMs.
    Additionally, our method uses less computation and GPU memory compared to fine-tuning, making it ideal for environments with limited resource budgets.
\end{itemize}
\section{Related Work}
\paragraph{Universal Compression}
Lossless data compression is a central topic of information theory. A data sequence is assumed to be generated by a random process. The goal is to design lossless compressors and decompressors that achieve the theoretical limit, i.e., the entropy rate of the data, asymptotically as the length of the data sequence goes to infinity. When the underlying distribution/statistics of data is known, optimal lossless compression can be achieved by methods like Huffman coding. However, in most real-world applications, the exact distribution is usually hard to obtain and the data we are given is a single realization of this distribution. This motivates the framework of \emph{universal compression}, in which we assume the underlying distribution belongs to a known family of distributions and require that the compressor and the decompressor should not be a function of the underlying distribution. The goal of universal compression is to design a single compression scheme that universally achieves the optimal theoretical limit, for every distribution in the family, without knowing which distribution generates the data.

Over the past five decades, various universal compressors have been developed. For the family of independent and identically distributed sequences, the Laplace compressor~\cite{Laplace1995} and the Krichevsky--Trofimov (KT) compressor~\cite{Krichevsky--Trofimov1981} are known to be universal. For the family of stationary ergodic processes, the Lempel--Ziv compressors~\cite{Lempel--Ziv77,Lempel--Ziv78} and the Burrows--Wheeler transform~\cite{effros2002universal} achieve the optimal performance.
For finite memory processes, techniques such as context tree weighting~\cite{willems1995context} have been developed. These universal compression techniques have found applications beyond theoretical research, playing a crucial role in standard data compressor such as \texttt{gzip} \cite{gzip}, image formats like \texttt{GIF}\cite{gif} and \texttt{TIFF} \cite{TIFF6}, and file compressors like \texttt{bzip2} \cite{bzip2}.

While most of the universal compressors can be shown to achieve the entropy rate in the asymptotic limit, the speeds they converge to the theoretical limit are known to be slow, even for variations of these algorithms that are highly optimized and widely adopted in practical file compression.

\paragraph{Lossless Compression with Language Models}
In recent years, the increasing availability of data has given rise to a new compression paradigm. This approach assumes access not only to the data to be compressed but also to additional training data. In case the training data is generated from the same distribution, a neural network can be trained on this auxiliary data to capture the underlying distribution. Compressors are then designed based on the learned distribution, leveraging the well-established equivalence between prediction and compression~\citep{schmidhuber1994predictive, schmidhuber1996sequential, mahoney2000fast, mikolov2012statistical, knoll2014cmix, cox2016syntactically, goyal2019deepzip, liu2019decmac, lmic, llmzip}.
Recent progress in large language models \cite{llama3, gpt4} has highlighted their growing proficiency in next-token prediction. 
Building on these advances, \citet{llmzip} explore the use of LLMs for lossless text compression, showing that their predictive abilities can effectively encode information. Concurrently, \citet{narashiman2024alphazip} show that domain-adaptive fine-tuning with GPT-2 Small improves compression efficiency over GZIP. Furthermore, \citet{mittu2024finezip} introduces an online memorization module to enhance compression efficiency.
Additionally, \citet{lmic} extend the use of LLM-based compression beyond text, demonstrating their potential for lossless compression for other modalities. In this paper, we consider this compression paradigm and focus on the case where the training data and testing data may come from different distributions. We demonstrate that existing neural compressors can be further improved by exploiting ideas from the product of experts introduced next.

\paragraph{Product of Experts} First introduced by \citet{PoE}, the \textit{Product of Experts (PoE)} model multiplies the probabilities output by \(K\) probabilistic models and then normalizes them over the entire dictionary, thereby producing a valid probability distribution.
\citet{gPoE} further extends this idea. An adaptive exponent is applied to each expert’s output probability, allowing the model to dynamically adjust the influence of each expert on the final output distribution.
In the context of text generation,  \citet{liu2021dexperts,hallinan2023steer} use the products of three experts to de-toxic the output for base language models. 
\section{Background}
In this section, we explain the equivalence between prediction and compression in more detail and conclude that the problem of lossless compression boils down to the design of a probability distribution for the given data that is as close to the true data distribution as possible.

\paragraph{Equivalence between Prediction and Compression}Let $\mathcal{A} = \{a_1, a_2, \ldots, a_D\}$ be a vocabulary set of size $D$, and $X_{<n+1} := \{X_1, X_2, \ldots, X_n\}\in\mathcal{A}^n$ denote a random sequence that follows the  probability distribution $p_{\text{data}}$. Let $x_{<n+1}:= \{x_1, x_2, \ldots, x_n\}$  denote a realization of $X_{<n+1}$.
Suppose we assign probability $p_\theta(X_{<n+1})$ to the sequence via arithmetic coding~\citep{pasco1977source, rissanen1976generalized}. It can be shown that~\cite{cover1999elements} the expected compression length can be bounded as $H(p_{\text{data}}, p_{\theta}) \le L_{p_{\theta}} \le H(p_{\text{data}}, p_{\theta})+2$, where $H(p_{\text{data}}, p_{\theta}) := - \mathbb{E}_{p_{\text{data}}} \left[ \log {p_{\theta}(X_{<n+1})} \right]$ denotes the cross-entropy between $ p_{\text{data}}$ and $p_{\theta}$. This implies that the closer $p_{\theta}$ to $ p_{\text{data}} $, the smaller the expected code length $ L_{p_{\theta}}$ becomes, until reaching the minimum, \ie, the Shannon entropy of $p_{\text{data}}$ when $ p_{\theta}= p_{\text{data}}$.

\paragraph{LLM-based compressor}  
\label{subsect:ac_autoregressive_model}
Given that LLMs are autoregressive models, we can exploit the chain rule:  
$
p_\theta(X_{<n+1}) = \prod_{i=1}^n p_\theta(X_i \mid X_{<i}).
$  
This means that compressing the entire sequence reduces to encoding each token sequentially using the conditional probabilities $p_\theta(X_i \mid X_{<i})$ for $i = 1, \ldots, n$, which is naturally compatible with arithmetic coding~\citep{pasco1977source, rissanen1976generalized}. As shown in Figure~\ref{fig:pipeline}, when compressing each token in a sequence, arithmetic coding takes the corresponding categorical distribution, selects the appropriate interval, and encodes the sequence into a real number. A detailed explanation of arithmetic coding is provided in Appendix~\ref{sec:detailed_intro_to_ac}.
\section{Method}
In this section, we propose a novel lossless text compressor that is robust and generalizable. We start by explaining how we design the conditional probabilities $p_\theta(X_i|X_{<i})$ based on a variation of Product of Experts.
\begin{figure*}
    \centering\includegraphics[width=0.85\linewidth]{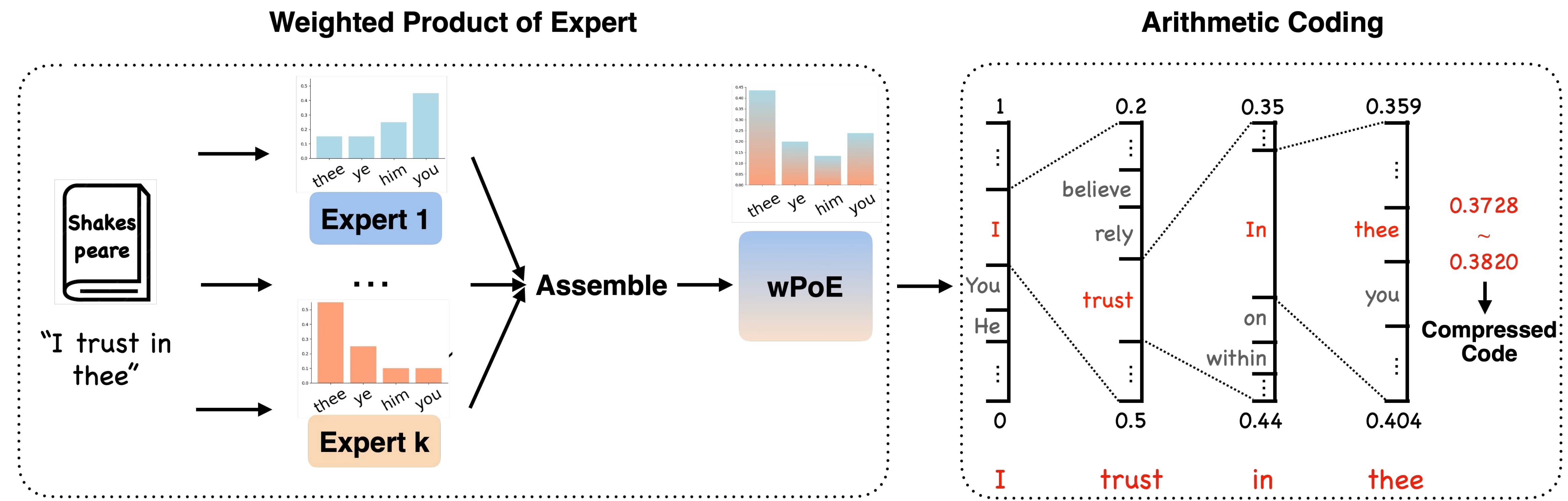}
    \caption{A diagram of our compression pipeline: we apply a weighted-product-of-experts approach to steer the probability distribution during inference. After obtaining the steered distribution, we use arithmetic coding to compress the sequence. As shown in the figure, the sequence ‘I, trust, in, thee’ can be compressed as any real number in the interval [0.3728, 0.3820), as introduced in Section~\ref{subsect:ac_autoregressive_model}.}
    \label{fig:pipeline}
\end{figure*}

\subsection{Weighted Product of Experts (wPoE)}
PoE \citep{PoE}  and gPoE \citep{gPoE} have been proposed before as better models that can be trained from scratch. 
Here, we propose a test-time steering approach for pretrained models using the weighted product of experts framework to improve performance.

Suppose we have $K$ experts $p_{\theta_1},p_{\theta_2},...,p_{\theta_K}$, \eg, autoregressive models, and we want to compress the data $X_{<n+1}$ that follows $p_{\text{data}}$. 
Our weighted product of expert (wPoE) model is given as follows,
\begin{equation}\label{eq:wpoe}
\begin{array}{cc}
    p_{\boldsymbol{\theta}, \boldsymbol{\alpha}}(X_n|  X_{<n}) =  \frac{1}{Z(\boldsymbol{\theta}, \boldsymbol{\alpha},n)}\displaystyle\prod_{k = 1}^K p_{\theta_k}(X_n  |  X_{<n})^{\alpha_k},
\end{array}
\end{equation}
where the weights are $\boldsymbol{\alpha} = \{\alpha_1,...,\alpha_K\}$, $\alpha_k \in[0,1]$, $\sum_{k = 1}^K \alpha_k = 1$, the parameters of experts are $\boldsymbol{\theta} = \{\theta_1,...,\theta_K\}$, and the normalization constant is $Z(\boldsymbol{\theta}, \boldsymbol{\alpha}, n) = \sum_{a \in \mathcal{A}}\prod_{k = 1}^K p_{\theta_k}(X_n = a |  X_{<n})^{\alpha_k}$.

We have the following theoretical result, which demonstrates that the optimal weighted product of experts performs at least as well as the best individual expert.
\begin{proposition}
\label{prop:wPoe_ensures_a_better_model}
Given the weighted product of expert model in Eq. \eqref{eq:wpoe}, we have 
\begin{equation*}
    \displaystyle\inf_{\boldsymbol{\alpha}}H(p_{\text{data}},p_{\boldsymbol{\theta}, \boldsymbol{\alpha}} ) \leq \displaystyle\min_{k \in \{1,...K\}} H(p_{\text{data}}, p_{\theta_k}).
\end{equation*}
\end{proposition}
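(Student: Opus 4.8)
The plan is to prove the proposition by exhibiting, for each expert index $k$, a single feasible weight vector under which the wPoE model in Eq.~\eqref{eq:wpoe} coincides \emph{exactly} with expert $k$. Once that is established, the infimum over all feasible weights is automatically bounded above by the cross-entropy of every individual expert, and hence by their minimum.

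Concretely, I would proceed as follows. First, fix $k \in \{1,\dots,K\}$ and take $\boldsymbol{\alpha} = \mathbf{e}_k$, the $k$-th standard basis vector (i.e.\ $\alpha_k = 1$ and $\alpha_i = 0$ for $i \neq k$); this lies in the feasible set since $\alpha_i \in [0,1]$ and $\sum_i \alpha_i = 1$. Second, substitute this choice into Eq.~\eqref{eq:wpoe}: adopting the convention $t^0 = 1$ for all $t \ge 0$, the product collapses to $\prod_i p_{\theta_i}(X_n \mid X_{<n})^{(\mathbf{e}_k)_i} = p_{\theta_k}(X_n \mid X_{<n})$, and the normalizer becomes $Z(\boldsymbol{\theta}, \mathbf{e}_k, n) = \sum_{a \in \mathcal{A}} p_{\theta_k}(X_n = a \mid X_{<n}) = 1$, because each $p_{\theta_k}(\cdot \mid X_{<n})$ is already a normalized conditional distribution over $\mathcal{A}$. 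Hence $p_{\boldsymbol{\theta}, \mathbf{e}_k}(X_n \mid X_{<n}) = p_{\theta_k}(X_n \mid X_{<n})$ for every position $n$ and every history $X_{<n}$. Third, applying the chain rule $p_{\boldsymbol{\theta},\boldsymbol{\alpha}}(X_{<n+1}) = \prod_{i=1}^n p_{\boldsymbol{\theta},\boldsymbol{\alpha}}(X_i \mid X_{<i})$ (exactly as used for the LLM-based compressor above) gives $p_{\boldsymbol{\theta},\mathbf{e}_k}(X_{<n+1}) = p_{\theta_k}(X_{<n+1})$ as distributions over full sequences, so $H(p_{\text{data}}, p_{\boldsymbol{\theta},\mathbf{e}_k}) = H(p_{\text{data}}, p_{\theta_k})$. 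Finally, since $\mathbf{e}_k$ belongs to the feasible set, $\inf_{\boldsymbol{\alpha}} H(p_{\text{data}}, p_{\boldsymbol{\theta},\boldsymbol{\alpha}}) \le H(p_{\text{data}}, p_{\boldsymbol{\theta},\mathbf{e}_k}) = H(p_{\text{data}}, p_{\theta_k})$; taking the minimum over $k$ yields the claim.

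I do not expect a serious obstacle here — the argument only requires evaluating the model at the vertices of the weight simplex, so no convexity or first-order optimality analysis is needed. The one point that deserves care is the degenerate exponentiation: when some expert assigns probability $0$ to a symbol, the term $0^0$ appears, so I would explicitly fix the convention $0^0 = 1$ to make the collapse to $p_{\theta_k}$ valid. It is also worth recording the trivial edge case in which the right-hand side is $+\infty$ (some symbol in the support of $p_{\text{data}}$ receives probability $0$ under every expert), where the inequality holds vacuously; otherwise one simply picks the $k$ attaining the finite minimum. A strengthening to \emph{strict} improvement would instead require showing that the gradient of $\boldsymbol{\alpha} \mapsto H(p_{\text{data}}, p_{\boldsymbol{\theta},\boldsymbol{\alpha}})$ is nonzero at the relevant vertex, which is beyond what the stated proposition asks.
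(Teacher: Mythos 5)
Your proof is correct: the vertices of the simplex are feasible, and at $\boldsymbol{\alpha}=\mathbf{e}_k$ the product collapses to $p_{\theta_k}$ with $Z(\boldsymbol{\theta},\mathbf{e}_k,n)=1$, so the infimum is trivially bounded by each expert's cross-entropy; your attention to the $0^0$ convention and the $+\infty$ edge case is appropriate. However, your route is genuinely different from the paper's. The paper first decomposes the wPoE cross-entropy for an arbitrary feasible $\boldsymbol{\alpha}$ as
\begin{equation*}
H(p_{\text{data}},p_{\boldsymbol{\theta},\boldsymbol{\alpha}})=\sum_{k=1}^{K}\alpha_k\,H(p_{\text{data}},p_{\theta_k})+\underset{p_{\text{data}}}{\mathbb{E}}\sum_{i=1}^{n}\log Z(\boldsymbol{\theta},\boldsymbol{\alpha},i),
\end{equation*}
and then proves, via an inductive weighted AM--GM argument (its Lemmas on $\sum_j\prod_k (p^{(k)}_j)^{\alpha_k}\le 1$), that $Z(\boldsymbol{\theta},\boldsymbol{\alpha},i)\le 1$ for \emph{every} $\boldsymbol{\alpha}$ on the simplex, so the log-partition term is nonpositive; the proposition then follows. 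Your vertex argument is more elementary and entirely sufficient for the stated inequality, since at a vertex the normalizer is exactly $1$ and no lemma is needed. What the paper's heavier machinery buys is the stronger statement that for every interior $\boldsymbol{\alpha}$ the wPoE cross-entropy is at most the $\boldsymbol{\alpha}$-weighted average of the experts' cross-entropies, with a strictly negative correction whenever the experts' conditionals differ (the equality condition of the lemma); this is what underpins the paper's discussion of why the constraint $\sum_k\alpha_k=1$ matters and why an optimized interior $\boldsymbol{\alpha}$ can strictly beat the best single expert, a possibility your proof does not address (as you correctly note, that would require a first-order analysis at the vertex).
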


It is crucial to emphasize that Proposition \ref{prop:wPoe_ensures_a_better_model} cannot hold without the constraint $\sum_{k = 1}^K \alpha_k = 1$ in our setting, where we do not update the parameters of the models. This is because the cross-entropy of a wPoE model can be decomposed into a weighted average of each expert’s cross-entropy (using the weights \(\alpha_k\)) plus an additional term. The condition \(\sum_{k=1}^K \alpha_k = 1\) is necessary for this additional term to become negative, allowing the wPoE’s overall cross-entropy to potentially be lower than that of any single expert. The detailed proof is given in Appendix \ref{sec:proof_of_proposition}. 

\subsection{Test-time Steering for Generalizable Compression}\label{subsect:test_time_steering}

Suppose we have a language model, such as an autoregressive model, that has been pretrained on a dataset following the distribution $p_{\text{data}}$.
Now, we aim to use this model for compressing a new dataset that follows a different distribution $p_{\text{data}}^{\prime}$.
Instead of fine-tuning the pretrained model on the new dataset—which can be computationally expensive—we seek to perform test-time steering, i.e., directly adjusting the pretrained model during inference.
In particular, we achieve test-time steering using the weighted Product of Experts (wPoE) model, where we combine the pretrained model with a training-free, lightweight model—specifically, a Naive Bayes classifier with Laplace smoothing ~\citep{naive_bayes}.

As proposed by \citet{zipf1949human} in \textit{Zipf's law}, the frequency of a word's occurrence in a text is inversely proportional to its rank in the frequency table. Based on this prior knowledge, Naive Bayes classifier with Laplace smoothing~\citep{naive_bayes} is widely adopted as the prior distribution for entropy coding in lossless text compression~\cite{Laplace1995}. 
It defines the following conditional probability of the next token $X_n$ conditioned on the previous tokens $X_{<n}$:
\begin{equation}
\label{eq:nb_prob}
q(X_n = a \mid X_{<n}) \coloneqq \frac{\sum_{k=1}^{n-1} \mathbb{I}(X_k = a) + 1}{n - 1 + D},
\end{equation}
where $\mathbb{I}(\cdot)$ denotes the indicator function and $D$ is the vocabulary size. 
Since Naive Bayes with Laplace smoothing has no learnable parameters, we can treat it as a training-free expert.
Despite its simplicity, this Laplace smoothing prior performs well across various types of text data, making it a strong candidate as a steering expert to assist pretrained autoregressive models in text compression. 
We then combine the Naive Bayes with Laplace smoothing $q$ with a pretrained language model $p_{\theta}$ using the weighted product of experts as follows, 
\begin{equation}\label{eq:wpoe_our}
    \pi_{\alpha}\!(X_n \vert X_{<n}) \!= \!
    \frac{q(X_n \vert X_{<n})^{\alpha} p_{\theta}(X_n \vert X_{<n})^{1 - \alpha}}
    {Z(\theta, \alpha, n)}.
\end{equation}
Here $\alpha$ is a scalar as we have only two experts.
Moreover, since we do not need to fine-tune the pretrained model $p_{\theta}$, \ie, $\theta$ is frozen, we omit the dependency of $\theta$ in the wPoE model $\pi$.

\subsection{Learning the Optimal Weights}\label{subsect:optimal_weights}

Although we obtain a wPoE model for test-time steering in Section \ref{subsect:test_time_steering}, there is no guarantee that it achieves better compression than the individual experts, \ie, the pretrained language model and the Naive Bayes model with Laplace smoothing~\citep{naive_bayes}. Proposition \ref{prop:wPoe_ensures_a_better_model} suggests that achieving better compression with wPoE requires determining the optimal weights ${\alpha}^{\ast}$.

A straightforward approach is to perform a grid search over all possible values of \({\alpha}\), but this is computationally expensive. 
Instead, we directly optimize the following objective with respect to \({\alpha}\) using gradient-based methods:
\begin{equation}
\min_{{\alpha}} \quad H(p_{\text{data}}^{\prime},\pi_{{\alpha}} ).
\end{equation}

Empirically, we find that even with a single data point and only 10 iterations of a second-order optimizer L-BFGS \cite{lbfgs}, this procedure yields a near-optimal solution for the weights. 
As the optimization is performed over a single scalar \(\alpha\), its computational overhead is negligible compared to the cost of model inference.

\subsection{Arithmetic Coding with wPoE}
After learning the optimal weight \(\alpha^{\ast}\), we can compress the new text data using our wPoE model.
Specifically, since our wPoE model in Eq. \eqref{eq:wpoe_our} is still an autoregressive model, we can use its conditional probabilities \(\pi_{\alpha^{\ast}}\!(X_i \vert X_{<i})\) to construct the arithmetic code~\citep{pasco1977source, rissanen1976generalized}.
This allows us to compress a sequence into a binary representation of a decimal number between 0 and 1.
Further details on adaptive arithmetic coding are provided in Appendix \ref{sec:detailed_intro_to_ac}.

\begin{table*}[htb]
  \centering
  \caption{The table presents the compression rates for five datasets. An asterisk (*) indicates that enwik8 and enwik9 are considered in-distribution for vanilla transformers. Additionally,  all “pretrained model + Ours” refers to the ensemble comprising the pretrained model and a Naive Bayes classifier with Laplace smoothing. We report the mean results over 5 runs. 
}
  \label{tab:compression_results}
    \resizebox{0.9\linewidth}{!}{
    \begin{tabular}{llcccll}
        \toprule
        \textbf{Tokenizer}& \textbf{Compressor} & \textbf{math} & \textbf{code} & \textbf{shakespeare}  & $\textbf{enwik8}^*$ &$\textbf{enwik9}^*$ \\
        \midrule
        \multirow{3}{*}{Byte Level}
        & gzip  &  43.59\%&  36.72\%& 52.80\% & 49.14\%&48.07\%\\
        & LZMA2 &  45.35\%&  38.61\%& 56.86\% & 51.33\%&49.98\%\\
        & Naive Bayes & 68.90\% & 64.65\% & 64.57\% & 66.03\% &67.14\% \\
        \cmidrule(lr){2-7}
        & Transformer 200K & 56.25\% & 65.67\% & 44.04\%  & 31.59\% &30.74\% \\
        & Transformer 200K + Ours& \textbf{50.95\%} & \textbf{53.94\%} & \textbf{42.12\%}  & \textbf{31.58\%} & \textbf{30.71\%} \\
        \cmidrule(lr){2-7}
        & Transformer 800K & 47.41\% & 62.13\% & 40.53\%  & 25.97\% &25.52\% \\
        & Transformer 800K + Ours& \textbf{44.34\%} & \textbf{49.68\%} & \textbf{38.79\%}  & \textbf{25.94\%} & \textbf{25.45\%} \\
        \cmidrule(lr){2-7}
        & Transformer 3.2M & 34.15\% & 41.02\% & 32.02\%& 18.53\% &17.66\% \\
        & Transformer 3.2M + Ours& \textbf{32.04\%} & \textbf{36.61\%} & \textbf{31.29\%}  & \textbf{18.52\%} & \textbf{17.65\%} \\
        \midrule
        BPE Tokenizer& Naive Bayes & 66.41\% & 59.30\% & 49.74\%  & 48.85\% &53.43\% \\
        \cmidrule(lr){2-7}
        (GPT-2)& GPT-2 & 17.68\% & 14.17\% & 23.44\%  & 16.48\% &16.73\% \\
        & GPT-2 + Ours& \textbf{17.55\%} & \textbf{14.16\%} & \textbf{23.11\%} & \textbf{16.42\%} & \textbf{16.65\%} \\
        \midrule
        BPE Tokenizer& Naive Bayes & 68.70\%& 47.54\% & 51.35\% & 48.87\% &51.93\%\\
        \cmidrule(lr){2-7}
        (LLaMA 3)& LLaMA 3.2-1B& 8.54\%& 6.66\%& 16.51\% & 10.22\%&10.05\%\\
         & LLaMA 3.2-1B + Ours& \textbf{8.48\%}& \textbf{6.64\%}& \textbf{16.42\%} & \textbf{10.16\%}& \textbf{9.98\%} \\
        \cmidrule(lr){2-7}
        & LLaMA 3.2-3B& 7.56\%& 5.99\%& 13.97\% & 9.16\%&8.93\%\\
        & LLaMA 3.2-3B + Ours& \textbf{7.50\%}& \textbf{5.95\%}& \textbf{13.88\%} & \textbf{9.09\%}& \textbf{8.86\%} \\
        \cmidrule(lr){2-7}
        & LLaMA 3-8B&  6.90\%& 5.61\% & 4.74\% & 8.18\% &8.10\%\\
        & LLaMA 3-8B + Ours& \textbf{6.84\%}& \textbf{5.57\%} & \textbf{4.73}\% & \textbf{8.12\%} & \textbf{8.04\%} \\
        \bottomrule
    \end{tabular}
    }
\end{table*}
\section{Experimental Evaluation}
\label{sec:results}

In this section, we evaluate the effectiveness of our framework through text compression experiments on diverse text datasets using various large language models (LLMs).

\paragraph{Language Models.} We evaluate two categories of pretrained language models. 
Following \citet{lmic}, the first category consists of byte-level decoder-only Transformers of varying sizes—200k, 800k, and 3.2M—trained on the \texttt{enwik8} dataset~\citep{vaswani2017attention}. 
The second category includes pretrained open-source Large Language Models, such as GPT-2~\citep{gpt2} and Llama-3~\citep{llama3}. 
For comparison, we also report the performance of widely used universal compressors, including gzip~\citep{gzip} and LZMA2~\citep{lzma2}.

\paragraph{Datasets.} We consider a wide range of text datasets: \texttt{enwik8}, \texttt{enwik9}, \texttt{code}, \texttt{math}, and \texttt{shakespeare}. \texttt{enwik9}~\citep{hutter2006prize} consists of the first $10^9$ bytes 
of the English Wikipedia dump on March~3, 2006. 
Meanwhile, \texttt{enwik8} is the first one-tenth portion of \texttt{enwik9}. The \texttt{code} dataset, introduced by \citet{bigcodebench}, is a code generation benchmark comprising 1,140 fine-grained Python programming tasks, spanning 139 libraries across 7 domains, which ensures that the dataset is sufficiently diverse in the coding domain. The \texttt{math} dataset, published by \citet{mathdataset}, contains 12,500 challenging competition mathematics problems spanning seven subjects, \eg, algebra, geometry, and number theory. \texttt{Shakespeare} \cite{shakespeare2007complete} contains the complete works, plays, sonnets, and poems of William Shakespeare. Following the setting of \citet{lmic}, we partition all datasets into sequences of 2048 bytes to optimize inference efficiency in Transformer-based models, such as GPT-2 and Llama-3.

\subsection{Test-time Steering for Text Compression With Two Experts}
In this section, we present the performance of the wPoE framework, which combines two experts: a pretrained model and a Naive Bayes classifier with Laplace smoothing. 
In Table \ref{tab:compression_results}, we compare the performance of the wPoE model with the original pretrained model across five datasets: \texttt{enwik8}, \texttt{enwik9}, \texttt{code}, \texttt{math}, and \texttt{shakespeare}.

It is important to note that byte-level decoder-only Transformers are trained on \texttt{enwik8}, and that \texttt{enwik8} and \texttt{enwik9} share closely related text distributions. 
Therefore, both \texttt{enwik8} and \texttt{enwik9} are considered in-distribution datasets for the three byte-level decoder-only Transformers.

Our results demonstrate that incorporating the Naive Bayes classifier with Laplace smoothing into byte-level decoder-only Transformers via wPoE significantly improves performance on out-of-distribution (OOD) datasets. 
The most notable improvement lies in Transformer 200K on the \texttt{code} dataset, likely due to the fact that the training corpus (\texttt{enwik8}) predominantly consists of natural language, which differs significantly from code. 
In contrast, \texttt{math} and \texttt{shakespeare} contain more natural language content, resulting in smaller, yet still notable, performance gains. 
Even on in-distribution datasets like \texttt{enwik8} and \texttt{enwik9}, we observe minimal but consistent improvements.

As model size increases, the benefit of wPoE decreases across all datasets. 
This trend is also observed with large language models (LLMs), where the improvement from wPoE is smaller compared to the pretrained byte-level Transformers.

We also evaluate our wPoE-based approach on GPT-2 and LLaMA 3 models of various sizes (1B, 3B, and 8B). 
Although large-scale LLMs are typically trained on diverse and extensive text corpora, resulting in minimal distribution shift with potentially unseen data, our results still show consistent improvements across various datasets and model sizes.
These findings highlight the versatility and effectiveness of our approach, demonstrating its applicability not only to smaller Transformers but also to state-of-the-art open-source LLMs.

\subsection{Test-time Steering for Generalizable Compression With Multiple Experts}
In this section, we conduct experiments with multiple experts. 
Specifically, we leverage byte-level decoder-only Transformers pretrained on the \texttt{enwik8} dataset, including a Transformer with 200k parameters, a Transformer with 800k parameters, a Transformer with 3.2M parameters, and a Naive Bayes classifier with Laplace smoothing. 
We progressively incorporate additional experts into the 3.2M-parameter Transformer using the Weighted Product of Experts (wPoE) approach and evaluate the assembled wPoE model on three OOD datasets.

Table~\ref{tab:muti_experts_compression_results} shows the reduction in compression rate achieved by wPoE models consisting of two, three, and four experts, compared to the 3.2M-parameter Transformer baseline. 
The results demonstrate that as the number of experts increases, the performance of the wPoE model improves steadily. 
Notably, the inclusion of the Naive Bayes classifier with Laplace smoothing results in a significantly higher improvement in compression rate than the addition of a smaller Transformer. 
This effect can be attributed to two factors: first, all three Transformer models are trained on the same dataset, and second, Laplace smoothing serves as an effective universal prior for text distributions.

\begin{table}[!htb]
  \centering
  \caption{Compression rates on three OOD datasets \texttt{code}, \texttt{math}, and \texttt{shakespeare}, as more experts are added to the Transformer 3.2M model. "1 expert" refers to the base Transformer pretrained on \texttt{enwik8}, additional experts correspond to the inclusion of 800K, 200K, and a Naive Bayes expert, respectively.}
  \label{tab:muti_experts_compression_results}
  \resizebox{0.85\linewidth}{!}{
    \begin{tabular}{lccc}
      \toprule
      \textbf{Compressor} & \textbf{math} & \textbf{code} & \textbf{shakespeare} \\
      \midrule
      1 expert & 34.15\% & 41.02\% & 32.02\% \\
      2 experts wPoE& 33.63\% & 40.59\% & 31.99\% \\
      3 experts wPoE& 33.62\% & 40.46\% & 31.97\% \\
      4 experts wPoE& \textbf{31.99\%} & \textbf{36.49\%} & \textbf{31.35\%} \\
      \bottomrule
    \end{tabular}
  }
\end{table}
\begin{figure*}[!ht]
    \centering
    \includegraphics[width=0.95\linewidth]{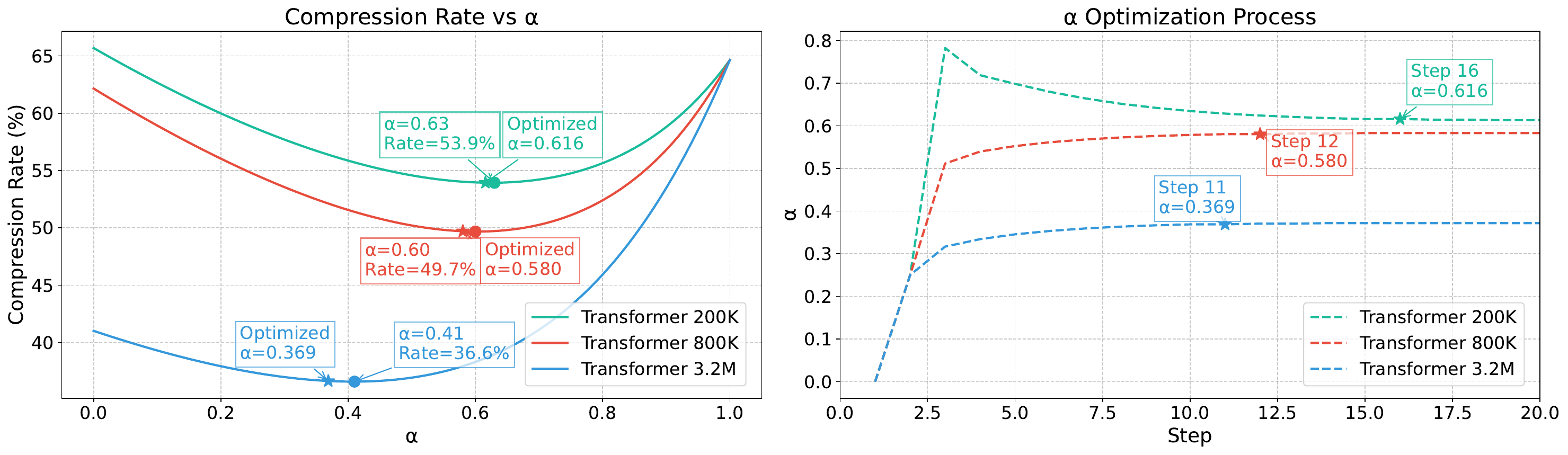}
    \caption{Figure (a) illustrates the compression rate on the OOD dataset \texttt{code} for three Our wPoE models of different sizes, under varying values of $\alpha$. Figure (b) shows $\alpha$ changes as the number of iterations increases when using L-BFGS optimizer. The annotated circular points represent the optimal $\alpha$ found through grid search. The points marked with asterisks indicate the converged $\alpha$ values after optimization.}
    \label{fig:alpha_optimize_vs_gridsearch}
    \vspace{-1em}
\end{figure*}

\subsection{Effectiveness of Optimal Weight Search}
In this section, we demonstrate the effectiveness of finding the optimal $\alpha$ via optimization and provide experimental observations that help explain this effectiveness. 
The results are shown in Figure~\ref{fig:alpha_optimize_vs_gridsearch}.

In the Figure~\ref{fig:alpha_optimize_vs_gridsearch}(a), we show how the compression rate of wPoE on the \texttt{code} dataset varies with $\alpha$ for byte-level decoder-only Transformers of three different sizes, each pretrained on the \texttt{enwik8} dataset. 
Notably, when $\alpha = 0$ or $\alpha = 1$, the wPoE method degenerates into a single expert.
Specifically, when $\alpha = 0$, the resulting compression rate corresponds to the performance of the pretrained byte-level decoder-only Transformers, while when $\alpha = 1$, it reflects the performance of the Naive Bayes classifier with Laplace smoothing. 
We employ grid search to determine the best $\alpha$ for each model.

Figure~\ref{fig:alpha_optimize_vs_gridsearch}(b) illustrates the evolution of $\alpha$ over multiple iterations in the optimization process on a single sample in the \texttt{code} dataset. 
In this setting, we selected a single OOD data sample, \ie, a sequence occupying 2048 bytes, from the new dataset and maximized the log probability of this sample to update $\alpha$ via backpropagation. 
Our results show that even with a single OOD data point, a second-order optimizer like L-BFGS \cite{lbfgs}) allows fast convergence of $\alpha$ within 20 iterations, yielding a near-optimal value.

Moreover, we observe that for a wPoE consisting of two experts, when using a grid search with a 0.01 interval, the observed compression rate on the potentially unseen dataset as a function of $\alpha$ is roughly convex. 
This phenomenon is consistently observed across all wPoE models in Table~\ref{tab:compression_results} that combine pretrained models with the Naive Bayes classifier with Laplace smoothing. 
This observation explains why, when optimizing $\alpha$ using a second-order optimizer like L-BFGS, $\alpha$ converges rapidly and accurately.

Finally, we note that when using a first-order optimizer, such as Adam~\citep{adam}, to optimize $\alpha$, an excessively high learning rate causes $\alpha$ to oscillate systematically over successive iterations. 
The amplitude of these oscillations gradually decreases as the iterations progress.

\begin{table}[!t]
  \centering
 \caption{Adaptation cost to OOD datasets in terms of computation and memory. Compared with finetuning, our method achieves competitive compression performance with significantly lower computational overhead. Results are averaged over 5 runs.}
  \label{tab:efficiency_table}
  \small
  \resizebox{\linewidth}{!}{
    \begin{tabular}{llccc}
        \toprule
        \textbf{Model} &  & \textbf{Compression} & \textbf{Computation} & \textbf{GPU Memory} \\
        \textbf{Size} & & \textbf{Rate} & \textbf{(GFLOPS)} & \textbf{Usage} \\
        \midrule
        \multirow{3}{*}{200K} & Pretrained & 65.74\% & N/A& N/A\\
         & Ours & \textbf{56.37\%} & \textbf{20.7} & \textbf{800M} \\
         & Finetune & 61.80\% & 31.0 & 1592M\\
        \midrule
        \multirow{3}{*}{800K} & Pretrained & 62.19\% & N/A& N/A\\
         & Ours & \textbf{52.73\%} & \textbf{47.8} & \textbf{806M} \\
         & Finetune & 53.87\% & 71.6 & 1616M\\
        \midrule
        \multirow{3}{*}{3.2M} & Pretrained & 41.08\% & N/A& N/A\\
         & Ours & 37.15\% & \textbf{151.7} & \textbf{826M} \\
         & Finetune & \textbf{35.45\%} & 182.0 & \textbf{1664M}\\
        \bottomrule
    \end{tabular}
  }
\end{table}
\begin{figure}[!t]
    \centering
    \includegraphics[width=\linewidth]{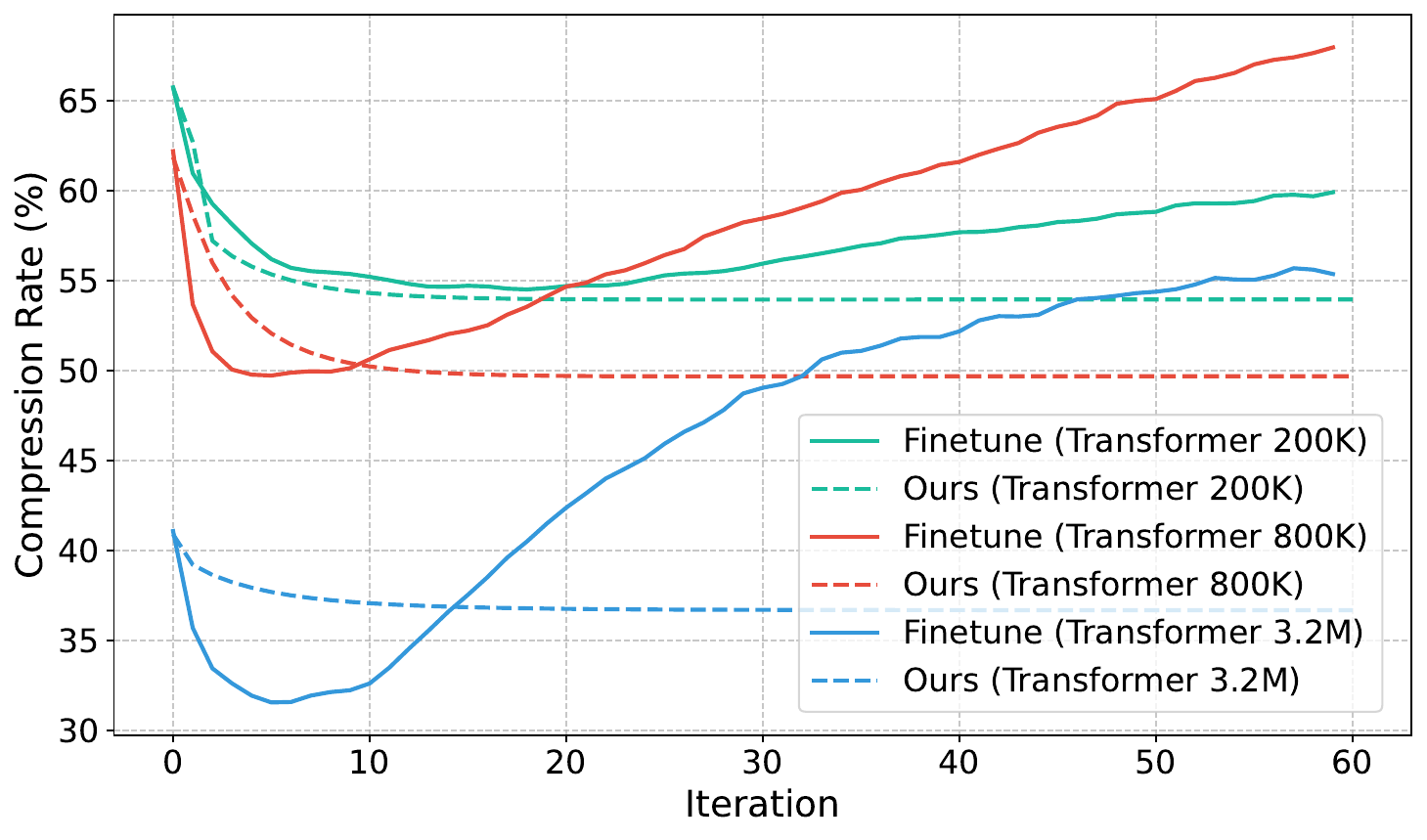}
    \caption{We compare our method against fine-tuning on byte-level decoder-only Transformers, given a sequence containing 2048 bytes of out-of-distribution data. Our method is more robust to overfitting. Consistent with Figure~\ref{fig:alpha_optimize_vs_gridsearch} (b), we initialize $\alpha$ to be 0.0.}
    \label{fig:optimize_vs_finetune}
    \vspace{-1em}
\end{figure}

\subsection{Comparison with Finetune}

In Figure~\ref{fig:optimize_vs_finetune}, we compare the advantages of our approach to fine-tuning. 
Specifically, we extract a mini-batch from the dataset and, after each optimization iteration, validate the overall compression rate on the entire \texttt{code} dataset. 
Our findings show that fine-tuning tends to overfit when the size of data is limited. 
In contrast, our method converges within 10 iterations, even with a single data point. 
This suggests that compressing and validating the compression rate across the entire dataset is unnecessary; instead, a near-optimal $\alpha$ can be determined using minimal data, thus avoiding the overfitting issues that commonly arise with fine-tuning.

From a computational efficiency perspective, under the same experimental conditions where a data point is extracted from the OOD \texttt{code} dataset for optimization, Table~\ref{tab:efficiency_table} shows that wPoE optimization outperforms or remains on par with fine-tuning across three different model sizes. 
Even when the computational cost of fine-tuning is slightly higher than that of wPoE optimization, our approach remains more efficient.

\subsection{Stability of Single Data Point Optimization}

In our previous experiments, we optimized the model using only a single data point at test time. This design choice was made to steer the model’s behavior with minimal computational overhead. However, this raises a natural concern: is a single data point sufficiently representative to yield stable and reliable performance?

To address this question, we investigated how the optimized $\alpha$ and the resulting compression rate evolve as the number of samples increases. As shown in Table~\ref{tab:stability}, both the optimized $\alpha$ and the achieved compression rate progressively converge to the values obtained via grid search. Furthermore, we observe a reduction in variance across runs as the sample size increases, indicating improved stability. Nevertheless, even with a single data point, we still observe stable improvements compared to the pretrained model. For each experiment, we report results averaged over ten random seeds.

These findings collectively demonstrate that while single-point optimization is effective and efficient, using more samples can further enhance stability and performance, yielding results that are closer to optimal.

\begin{table}[!t]
  \centering
  \caption{Compression rates and alpha values on the \texttt{code} dataset using the Transformer 200K model pretrained on \texttt{enwik8}, evaluated under varying adaptation sample sizes.}
  \label{tab:stability}
  \small
  \resizebox{\linewidth}{!}{
    \begin{tabular}{lcc}
        \toprule
        \textbf{Model} & \textbf{Compression Rate} & \textbf{Alpha} \\
        \midrule
        Pretrain & 65.67\% & N/A \\
        Grid search & 53.90\% & 0.630 \\
        1 sample  & 54.35 $\pm$ 0.665\% & 0.592 $\pm$ 0.168 \\
        10 samples  & 53.94 $\pm$ 0.028\% & 0.613 $\pm$ 0.033 \\
        100 samples  & 53.92 $\pm$ 0.008\% & 0.609 $\pm$ 0.088 \\
        1000 samples  & 53.92 $\pm$ 0.006\% & 0.609 $\pm$ 0.005 \\
        \bottomrule
    \end{tabular}
  }
\end{table}
\section{Conclusion}

In this work, we introduced a novel test-time steering framework for lossless text compression based on the Weighted Product of Experts (wPoE). By integrating a pretrained autoregressive language model with a training-free universal compressor, \ie, a Naive Bayes classifier with Laplace smoothing, our method effectively enhances out-of-distribution performance without the need for additional retraining. The wPoE framework guarantees that the integrated model performs at least as well as its best individual expert.

Our extensive experiments across multiple datasets demonstrate that our approach consistently improves compression rates, particularly on OOD data where neural compressors typically struggle. Notably, we observed that even with a very limited amount of data, our optimization procedure converges to a near-optimal weight within just 10 iterations when using a second-order optimizer.
Moreover, by progressively incorporating additional experts, we further boost performance, with the inclusion of the Naive Bayes classifier contributing significantly to compression improvements.

These findings underscore the potential of the Weighted Product of Experts (wPoE). For future work, we believe that incorporating other suitable experts could further enhance generalization across various NLP tasks.  
More specifically, if an effective prior can be identified for a given task, integrating it with a pretrained model could improve performance with minimal additional cost.

\newpage
\section{Limitation}
Our model has only one learnable parameter, which limits its expressive capacity. If computational resources are abundant, fine-tuning a pretrained model or training a new model with the same size from scratch on an out-of-distribution (OOD) dataset would lead to better performance.

Moreover, the weighted product of experts (wPoE) approach relies on the experts being sufficiently diverse. If two models are too similar, the stronger model may effectively ``absorb'' the weaker one, causing the weight assigned to the weaker model to approach zero. As a result, the weaker model would contribute little to the overall performance of wPoE.

\section*{Acknowledgement}

This work was funded, in part, by the NSERC DG Grant (No. RGPIN-2019-05448, No. RGPIN-2021-04012, and No. RGPIN-2022-04636), the Vector Institute for AI, Canada CIFAR AI Chair, and a Google Gift Fund. Resources used in preparing this research were provided, in part, by the Province of Ontario, the Government of Canada through the Digital Research Alliance of Canada \url{alliance.can.ca}, and companies sponsoring the Vector Institute \url{www.vectorinstitute.ai/#partners}, and Advanced Research Computing at the University of British Columbia. 
Additional hardware support was provided by John R. Evans Leaders Fund CFI grant.
\bibliography{custom}

\appendix

\section{Appendix}
\label{sec:appendix}
\subsection{Detailed Introduction to Arithmetic Coding}
\label{sec:detailed_intro_to_ac}
Given sequence \(X_{<n+1}\) and an autoregressive model $ p_{\theta}(X_{<n+1}) \;=\; \displaystyle\prod_{i=1}^{n} p_{\theta}(X_i \mid X_{<i})$, the arithmetic code for this sequence is represented by a binary decimal \(\lambda \in [0,1)\). The coding procedure iteratively refines an interval \([l_n, u_n)\subset [0,1)\) such that any real number within the final interval shares the same binary representation up to sufficient precision. This representation is the arithmetic code for \(x_{<n+1}\). 

Set the initial interval 
$
I_0 \;=\; [0,\, 1).
$
Denote \(I_{i-1} = [l_{i-1},\, u_{i-1})\) and its length by \(L(I_{i - 1}) = u_{i-1} - l_{i-1}\).
At the \(i\)-th step (where \(1 \le i \le n\)), we aim to encode the token \(x_i\). Suppose the vocabulary (or set of possible symbols) is \(\{a_1, a_2, \dots, a_D\}\). We divide \(I_{i-1}\) into \(D\) sub-intervals
$
I_i^j,j \in \{1,2,\dots,D\},
$
where each sub-interval \(I_i^j\) has length
$
L(I_i^j)=L(I_{i - 1}) \cdot p_{\theta}(X_i = a_j | x_{<i}).
$
Hence, the sub-interval \(I_i^j\) is defined by
\begin{equation}
    \small
    \begin{aligned}
     &I_i^j :=[l_{i-1} + L(I_{i - 1})\sum_{k=1}^{j-1}p_{\theta}(X_i = a_k \mid x_{<i}),\\
     &l_{i-1} + L(I_{i - 1})\sum_{k=1}^{j}p_{\theta}(X_i = a_k \mid x_{<i})).
    \end{aligned}
\end{equation}

After observing \(x_i = a_j\), we choose 
$
I_i \;=\; I_i^j.
$
Repeating this process for \(i=1,2,\dots,n\) yields the final interval 
$
I_n = [l_n,\, u_n).
$
Any \(\lambda \in [l_n,\, u_n)\) shares the same binary expansion up to the necessary precision. This binary expansion is the adaptive arithmetic code for the sequence \(x_{<n+1}\).
\subsection{Proof of Proposition~\ref{prop:wPoe_ensures_a_better_model}}
\label{sec:proof_of_proposition}
Let  $p_{\theta_1},p_{\theta_2},...,p_{\theta_K}$ be $K$ autoregressive models used to compress a sequence $x_{<n+1} = \{x_1, x_2, \dots, x_n\}$, where $X_{<n} \sim p_{\text{data}}$. Each $ x_i $ takes values from the dictionary $\mathcal{A} = \{ a_1, \dots, a_D \} $.  For an autoregressive model $p_{\theta_k}$, the following equation reveals the relationship between the joint distribution of \(X_{<n}\) and the conditional distribution of \(X_n\):
\begin{equation}
\label{eq:chain_rule_k}
    p_{\theta_k}(X_{<n+1}) = \displaystyle\prod_{i = 1}^{n} p_{\theta_k}(X_i | X_{<i})
\end{equation}
Therefore, the cross-entropy between $p_{\text{data}}$ and a certain model \(p_{\theta_k}\) can be expanded using the Equation~\ref{eq:chain_rule_k} as follows:

\begin{equation}
    H(p_{\text{data}},p_{\theta_k}) = \underset{p_{\text{data}}}{\mathbb{E}} \sum_{i = 1}^{n} -\log p_{\theta_k}(X_i | X_{<i})
\end{equation}
Our weighted product of expert (wPoE) model is given by ~\eqref{eq:wpoe},
\begin{equation*}
\begin{array}{cc}
    p_{\boldsymbol{\theta}, \boldsymbol{\alpha}}(X_n|  X_{<n}) =  \frac{1}{Z(\boldsymbol{\theta}, \boldsymbol{\alpha},n)}\displaystyle\prod_{k = 1}^K p_{\theta_k}(X_n  |  X_{<n})^{\alpha_k},
\end{array}
\end{equation*}
where the weights are $\boldsymbol{\alpha} = \{\alpha_1,...,\alpha_K\}$, $\alpha_k \in[0,1]$, $\sum_{k = 1}^K \alpha_k = 1$, the parameters of experts are $\boldsymbol{\theta} = \{\theta_1,...,\theta_K\}$, and the normalization constant is $Z(\boldsymbol{\theta}, \boldsymbol{\alpha}, n) = \sum_{a \in \mathcal{A}}\prod_{k = 1}^K p_{\theta_k}(X_n = a |  X_{<n})^{\alpha_k}$.

Here we can derive:
\begin{equation}
    \begin{array}{cc}
     H(p_{\text{data}}, p_{\boldsymbol{\theta}, \boldsymbol{\alpha}}) &=  \sum_{k = 1}^K \alpha_k H(p_{\text{data}},p_{\theta_k})\\
     &+\underset{p_{\text{data}}}{\mathbb{E}} \displaystyle\sum_{i=1}^{n} \log \left[Z(\boldsymbol{\theta}, \boldsymbol{\alpha},i)\right].
\end{array}
\end{equation}
To complete the proof, we introduce the following technical lemma for bounding $Z(\boldsymbol{\theta}, \boldsymbol{\alpha},i)$. 
\begin{lemma}
    \label{lemma:proof_inequality}
    Let $p^{(k)} = \bigl(p^{(k)}_1, \ldots, p^{(k)}_D\bigr)$ for $k=1,\dots,K$ 
    be $K$ categorical distributions, so $\sum_{j=1}^D p^{(k)}_j = 1$ for each $k$. 
    Let $\alpha_1,\dots,\alpha_K \ge 0$ satisfy $\sum_{k=1}^K \alpha_k = 1.$ 
    Then
    \[
    \sum_{j=1}^D 
    \prod_{k=1}^K \bigl(p^{(k)}_j\bigr)^{\alpha_k}
    \;\;\le\;\; 1,
    \]
    with equality if and only if $p^{(1)} = p^{(2)} = \cdots = p^{(K)}$ 
    or exactly one $\alpha_k=1$ and the rest are zero.
\end{lemma}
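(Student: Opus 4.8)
The plan is to reduce the claim to a coordinatewise application of the weighted arithmetic--geometric mean (AM--GM) inequality, which is precisely the tool that uses the normalization $\sum_{k=1}^K \alpha_k = 1$. For each fixed symbol index $j \in \{1,\dots,D\}$, the weighted AM--GM inequality with nonnegative weights $\alpha_1,\dots,\alpha_K$ summing to $1$ gives
\[
\prod_{k=1}^K \bigl(p^{(k)}_j\bigr)^{\alpha_k} \;\le\; \sum_{k=1}^K \alpha_k\, p^{(k)}_j .
\]
(One may equivalently read the whole lemma as the generalized H\"older inequality $\sum_j \prod_k a_{kj}^{\alpha_k} \le \prod_k \bigl(\sum_j a_{kj}\bigr)^{\alpha_k}$ specialized to $a_{kj} = p^{(k)}_j$; AM--GM itself follows from concavity of $\log$.)

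First I would sum the displayed inequality over $j = 1,\dots,D$ and then interchange the two finite sums on the right-hand side, obtaining
\[
\sum_{j=1}^D \prod_{k=1}^K \bigl(p^{(k)}_j\bigr)^{\alpha_k}
\;\le\; \sum_{k=1}^K \alpha_k \sum_{j=1}^D p^{(k)}_j
\;=\; \sum_{k=1}^K \alpha_k
\;=\; 1,
\]
where the middle equality uses that each $p^{(k)}$ is a categorical distribution and the last uses $\sum_k \alpha_k = 1$. This proves the inequality with essentially no work.

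For the equality characterization, I would note that the bound above was obtained by adding $D$ nonnegative coordinatewise inequalities, so equality holds in the lemma if and only if AM--GM is an equality for \emph{every} $j$. Weighted AM--GM is an equality exactly when all the terms carrying positive weight coincide, i.e.\ $p^{(k)}_j$ is the same for every $k$ with $\alpha_k > 0$ (adopting the convention $0^0 = 1$, so experts with $\alpha_k = 0$ contribute the factor $1$ and drop out; also note that any coordinate $j$ with $p^{(k)}_j = 0$ for some active $k$ forces strict inequality unless that coordinate vanishes in all active experts). Imposing this for all $j$ forces the distributions $p^{(k)}$ with $\alpha_k > 0$ to be identical as vectors: if only one weight is positive it must equal $1$, recovering the ``exactly one $\alpha_k = 1$'' degenerate case, and otherwise all active experts agree, giving $p^{(1)} = \cdots = p^{(K)}$ up to the irrelevant zero-weight experts.

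The only place requiring care is this equality analysis, not the inequality: one must be careful about the $0^0$ convention, track the behavior of coordinates where some $p^{(k)}_j$ vanishes, and reconcile the clean statement ``$p^{(1)} = \cdots = p^{(K)}$'' with the fact that zero-weight experts are genuinely unconstrained. The inequality itself is a one-line consequence of AM--GM (equivalently H\"older) and presents no obstacle.
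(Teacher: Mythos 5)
Your proof is correct, and it takes a more direct route than the paper. You apply the $K$-term weighted AM--GM inequality coordinatewise and sum over $j$, so the bound $\sum_j \prod_k (p^{(k)}_j)^{\alpha_k} \le \sum_k \alpha_k \sum_j p^{(k)}_j = 1$ drops out in one step. The paper instead proves only the two-expert case this way (its Lemma~2 is exactly your argument with $K=2$, via the pointwise inequality $x^{\alpha}y^{1-\alpha}\le \alpha x+(1-\alpha)y$) and then extends to general $K$ by induction, merging two experts into a renormalized ``composite'' distribution $r_j \propto (p^{(1)}_j)^{\alpha_1/\alpha'}(p^{(2)}_j)^{\alpha_2/\alpha'}$ — a H\"older-by-induction pattern. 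What each buys: your version is shorter and keeps the equality analysis transparent, but it outsources the $K$-term AM--GM (which itself needs concavity of $\log$ or an induction, so the total work is comparable); the paper's version is self-contained given only the two-term case, at the cost of the renormalization bookkeeping and a somewhat hand-waved propagation of the equality condition through the recursion. On that last point your treatment is actually sharper than the lemma as stated: the precise equality condition is that all experts with $\alpha_k>0$ coincide (zero-weight experts being unconstrained), which strictly contains the stated ``all equal or exactly one $\alpha_k=1$'' — e.g.\ $K=3$, $\alpha=(\tfrac12,\tfrac12,0)$, $p^{(1)}=p^{(2)}\neq p^{(3)}$ attains equality. Your handling of the $0^0$ convention and of coordinates where some active $p^{(k)}_j$ vanishes is also correct; none of this affects the use of the lemma in Proposition~1, since only the inequality and the sufficiency direction are needed there.
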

From the Lemma \ref{lemma:proof_inequality}, it can be concluded that:
\begin{align}
    Z(\boldsymbol{\theta}, \boldsymbol{\alpha},i) \leq  1, \forall \boldsymbol{\theta}, \boldsymbol{\alpha},i.
\end{align}
Equality holds if and only if  each distribution $p_{\theta_k}(X_i \mid X_{<i})$ is the same, or $\alpha_k = 1$ and others are 0.
Thus we can conclude that:

\begin{small}
\begin{equation}
\begin{array}{cc}
     \displaystyle\inf_{\boldsymbol{\alpha}}H(p_{\text{data}},p_{\boldsymbol{\theta}, \boldsymbol{\alpha}} ) &\leq \displaystyle\min_{k \in \{1,\dots,K\}} H(p_{\text{data}}, p_{\theta_k}) \\
     &+ \underset{p_{\text{data}}}{\mathbb{E}} \displaystyle\sum_{i=1}^{n} \log \left[Z(\boldsymbol{\theta}, \boldsymbol{\alpha},i)\right]\\
     \displaystyle\inf_{\boldsymbol{\alpha}}H(p_{\text{data}},p_{\boldsymbol{\theta}, \boldsymbol{\alpha}} ) &\leq \displaystyle\min_{k \in \{1,\dots,K\}} H(p_{\text{data}}, p_{\theta_k})
\end{array}
\end{equation}
\end{small}
To complete the proof of Proposition~\ref{prop:wPoe_ensures_a_better_model}, it now suffices to show Lemma~\ref{lemma:proof_inequality}. 
In order to establish Lemma \ref{lemma:proof_inequality}, we first need a helper lemma stated as follows.
\begin{lemma}
    \label{lemma:proof_inequality_2distribution}
    Let $p = (p_1, \ldots, p_D)$ and $q = (q_1, \ldots, q_D)$ be two categorical 
    distributions satisfying $\sum_{j=1}^D p_j = 1$ and $\sum_{j=1}^D q_j = 1$. 
    Then, for any $\alpha \in [0,1]$, the following inequality holds:
    \[
    \sum_{j=1}^D p_j^{\alpha} q_j^{\,1-\alpha} \;\;\le\;\; 1.
    \]
    Moreover, equality holds if and only if $p_j = q_j$ for all $j$ 
    (i.e., $p = q$) or $\alpha \in \{0,1\}$.
\end{lemma}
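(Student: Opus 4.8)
\textbf{Proof proposal for Lemma~\ref{lemma:proof_inequality_2distribution}.}
The plan is to recognize $\sum_{j=1}^D p_j^\alpha q_j^{1-\alpha}$ as an instance where H\"older's inequality (equivalently, the weighted AM--GM inequality applied termwise) is tight. First I would handle the trivial endpoint cases $\alpha=0$ and $\alpha=1$ directly: the sum collapses to $\sum_j q_j = 1$ or $\sum_j p_j = 1$ respectively, giving equality. So assume $\alpha \in (0,1)$ from now on, and also assume without loss of generality that all $p_j, q_j > 0$ (terms with $p_j=0$ or $q_j=0$ contribute $0$ to the sum and only help the inequality; I would note the degenerate-support case separately when discussing the equality condition).

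For the main case, the key step is the pointwise bound coming from weighted AM--GM: for each $j$,
\begin{equation*}
p_j^{\alpha} q_j^{1-\alpha} \;\le\; \alpha\, p_j + (1-\alpha)\, q_j,
\end{equation*}
which is just the concavity of $\log$ applied to the two points $p_j, q_j$ with weights $\alpha, 1-\alpha$. Summing over $j$ and using $\sum_j p_j = \sum_j q_j = 1$ yields
\begin{equation*}
\sum_{j=1}^D p_j^{\alpha} q_j^{1-\alpha} \;\le\; \alpha \sum_{j=1}^D p_j + (1-\alpha)\sum_{j=1}^D q_j \;=\; \alpha + (1-\alpha) \;=\; 1,
\end{equation*}
which is exactly the claimed inequality. (Alternatively, one could cite H\"older's inequality with conjugate exponents $1/\alpha$ and $1/(1-\alpha)$ applied to the vectors $(p_j^\alpha)_j$ and $(q_j^{1-\alpha})_j$; the AM--GM route is more self-contained and makes the equality analysis cleaner, so I would use that.)

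The remaining work is the equality characterization, which I expect to be the only mildly delicate part. Since $\alpha \in (0,1)$, strict concavity of $\log$ means the pointwise bound $p_j^\alpha q_j^{1-\alpha} \le \alpha p_j + (1-\alpha) q_j$ is an equality precisely when $p_j = q_j$. Therefore the summed inequality is an equality if and only if $p_j = q_j$ for every $j$ with $p_j, q_j > 0$; combined with the observation that any coordinate where one of $p_j, q_j$ is zero forces that term to be $0 < \alpha p_j + (1-\alpha) q_j$ unless both are zero, one concludes equality holds iff $p = q$. Together with the endpoint cases $\alpha \in \{0,1\}$ handled at the start, this gives the stated "if and only if" condition. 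The main obstacle is simply being careful about zero entries in $p$ or $q$ so that the equality statement is clean; there is no substantive analytic difficulty.
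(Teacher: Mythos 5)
Your proposal is correct and follows essentially the same route as the paper: the termwise weighted AM--GM bound $p_j^{\alpha}q_j^{1-\alpha} \le \alpha p_j + (1-\alpha)q_j$, summed over $j$, with the equality case traced back to strict concavity (the paper derives the same pointwise inequality by a short calculus argument on $f(t)=t^{\alpha}-(1-\alpha)t-\alpha$ rather than citing concavity of $\log$). Your explicit handling of the endpoint cases $\alpha\in\{0,1\}$ and of zero entries is a slightly more careful treatment of the equality condition than the paper gives, but the substance is the same.
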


\begin{proof}[Proof of Lemma~\ref{lemma:proof_inequality_2distribution}]

\textbf{Step 1. Pointwise inequality.} 
For any $x,y \ge 0$ and $\alpha \in [0,1]$, we show
\[
\alpha\,x \;+\; (1-\alpha)\,y \;\;\ge\;\; x^{\alpha}\,y^{\,1-\alpha}.
\]
When $x = 0$, the statement is trivial, so assume $x > 0$. 
Define $t := \frac{y}{x} > 0$. Then the inequality is equivalent to
\[
\alpha \;+\; (1-\alpha)\,t \;\;\ge\;\; t^{\alpha}.
\]
Set $f(t) = t^{\alpha} - (1-\alpha)\,t - \alpha$. We compute its derivative:
\[
\frac{d f}{dt} = \alpha\,t^{\alpha-1} - (1-\alpha).
\]
We have $\frac{d f}{dt}\big|_{t=1} = 0$. One checks that $t=1$ maximizes $f$, 
so $f(t)\le f(1)=0$. Hence
\[
t^{\alpha} \;\le\; \alpha \;+\; (1-\alpha)\,t,
\]
i.e., $x^{\alpha}\,y^{\,1-\alpha} \le \alpha\,x + (1-\alpha)\,y$.

\textbf{Step 2. Summation.} 
Applying the above inequality to each pair $(x,y) = (p_j,q_j)$ and summing 
over $j=1,\ldots,D$ gives
\[
\sum_{j=1}^D p_j^{\alpha} q_j^{\,1-\alpha}
\;\;\le\;\;
\sum_{j=1}^D \bigl(\alpha\,p_j + (1-\alpha)\,q_j\bigr).
\]
Since $\sum_{j=1}^D p_j = 1$ and $\sum_{j=1}^D q_j = 1$, the right side 
simplifies to $\alpha \cdot 1 + (1-\alpha)\cdot 1 = 1$. Hence
\[
\sum_{j=1}^D p_j^{\alpha} q_j^{\,1-\alpha} \;\;\le\;\; 1.
\]

\textbf{Step 3. Equality condition.} 
By the analysis of $f(t)$, the equality in the pointwise inequality holds 
if and only if $t=1$ (i.e., $p_j = q_j$) or $\alpha \in \{0,1\}$. Thus the 
sum only achieves equality if $p=q$ or $\alpha\in\{0,1\}$. 
\end{proof}

We can now prove Lemma~\ref{lemma:proof_inequality} with the help of Lemma~\ref{lemma:proof_inequality_2distribution}.
\begin{proof}[Proof of Lemma~\ref{lemma:proof_inequality}]

\textbf{Base Case ($K=2$).} 
For two distributions $(p_j)$ and $(q_j)$ with weights $\alpha$ and $1-\alpha$, 
it is already proved that
\[
\sum_{j=1}^D p_j^{\,\alpha} \, q_j^{\,1-\alpha} \;\;\le\;\; 1,
\]
with equality precisely if $p=q$ or $\alpha \in \{0,1\}$.

\textbf{Inductive Step.} 
Assume the statement holds for $K-1$ distributions. We prove it for $K$.

Let $p^{(1)}, p^{(2)}, \dots, p^{(K)}$ be $K$ distributions with weights 
$\alpha_1,\dots,\alpha_K$ such that $\sum_{k=1}^K \alpha_k = 1$. 
Define $\alpha' = \alpha_1 + \alpha_2$. 
By the $K=2$ base case,
\[
\sum_{j=1}^D 
\bigl(p^{(1)}_j\bigr)^{\frac{\alpha_1}{\alpha'}}
\bigl(p^{(2)}_j\bigr)^{\frac{\alpha_2}{\alpha'}}
\;\;\le\;\; 1.
\]
Given this sum can not be zero, define a new ``composite'' distribution 
$r = (r_1,\dots,r_D)$ by
\[
r_j 
= 
\frac{
 \bigl(p^{(1)}_j\bigr)^{\frac{\alpha_1}{\alpha'}}\,\bigl(p^{(2)}_j\bigr)^{\frac{\alpha_2}{\alpha'}}
}{
 \sum_{i=1}^D 
 \bigl(p^{(1)}_i\bigr)^{\frac{\alpha_1}{\alpha'}}\,\bigl(p^{(2)}_i\bigr)^{\frac{\alpha_2}{\alpha'}}
},
\quad
j=1,\dots,D.
\]
Clearly, $\sum_{j=1}^D r_j = 1$. 
Now we have $K-1$ distributions: $r, p^{(3)}, \dots, p^{(K)}$, 
with weights $\alpha',\alpha_3,\dots,\alpha_K$ summing to 1.

By the inductive hypothesis, 
\[
\sum_{j=1}^D
r_j^{\,\alpha'}
\,\bigl(p^{(3)}_j\bigr)^{\alpha_3}
\,\cdots\,
\bigl(p^{(K)}_j\bigr)^{\alpha_K}
\;\;\le\;\; 1.
\]
Substituting the definition of $r_j$ into the left-hand side shows
\begin{multline}
\sum_{j=1}^D
  (p^{(1)}_j)^{\alpha_1}(p^{(2)}_j)^{\alpha_2}
  \prod_{k=3}^K (p^{(k)}_j)^{\alpha_k} \\
\le
\Biggl(
  \sum_{i=1}^D
    (p^{(1)}_i)^{\alpha_1}(p^{(2)}_i)^{\alpha_2}
\Biggr)^{\alpha'}.
\end{multline}
Since the base case says $\sum_{i=1}^D (p^{(1)}_i)^{\alpha_1} (p^{(2)}_i)^{\alpha_2} 
\le 1$, 
raising it to the power $\alpha'$ also yields a value at most 1. Thus
\[
\sum_{j=1}^D
\prod_{k=1}^K \bigl(p^{(k)}_j\bigr)^{\alpha_k}
\;\;\le\;\; 1,
\]
which completes the inductive step.

\textbf{Equality Condition.} 
In base case, equality demands $p=q$ or 
$\alpha=0 $ or $\alpha = 1$. Propagating this requirement through 
each step of the recursion shows all distributions must be identical 
or all but one weight are zero. 

By induction, the proof is complete.
\end{proof}

\subsection{Implementation details}
During pretraining, we employ the Adam optimizer \cite{adam} with a learning rate of 5e-4, a batch size of 128, and 30,000 training iterations. 
For optimization with LBFGS \cite{lbfgs}, we use a tolerance of 1e-5 for both the gradient and parameter changes, setting the initial learning rate to 0.5. 
All experiments are conducted on a single A100 GPU with 80GB.
For all our implementations, we use PyTorch\citep{paszke2019pytorch} which is released under BSD 3-Clause license.

\subsection{Additional Discussions}

\subsubsection{Naive Bayes with Lidstone Smoothing}
\label{subsec:lidstone}

We consider the whole family of Naive Bayes with Lidstone Smoothing, where the token probability is
\[
\hat{p}(x) = \frac{c(x) + \alpha}{N + \alpha |V|}, \quad \alpha > 0,
\]
with $c(x)$ the token count, $N$ the total count, and $|V|$ the vocabulary size. This estimator can be written as an interpolation between the empirical Maximum Likelihood Estimation and a uniform distribution:
\[
\hat{p}(x)
= \frac{N}{N+\alpha|V|}\,\frac{c(x)}{N}
+ \frac{\alpha|V|}{N+\alpha|V|}\,\frac{1}{|V|}.
\]

When $\alpha = 0$, it recovers the unsmoothed Maximum Likelihood Estimation. When $\alpha = \tfrac{1}{2}$, it corresponds to the Krichevsky--Trofimov (KT) estimator. When $\alpha = 1$, it represents Naive Bayes with Laplace smoothing.

We sweep $\alpha$ and report compression on Enwik8. KT ($\alpha{=}0.5$) is slightly better than Laplace ($\alpha{=}1$), but smaller $\alpha$ is not uniformly better: as $\alpha$ decreases, the estimator adheres more closely to training counts and can become less robust under sparsity or domain shift. Since $\alpha$ behaves like a dataset-dependent hyperparameter and our goal is simplicity, we fix $\alpha{=}1$ in the main experiments.

\begin{table}[!htb]
  \centering
  \caption{Compression rates on datasets \texttt{enwik8}, as $\alpha$ are changed from 1.0 to 0.01 and context length changed from 1 to 2048.}
  \label{tab:muti_experts_compression_results}
  \resizebox{0.95\linewidth}{!}{
    \begin{tabular}{lcccc}
    \toprule
    \textbf{Context} & \textbf{$\alpha=1.0$} & \textbf{$\alpha=0.5$} & \textbf{$\alpha=0.1$} & \textbf{$\alpha=0.01$} \\
    \textbf{length} & \textbf{(Laplace)} & \textbf{(KT)} & & \\
    \midrule
    1 & 99.38\% & 99.12\% & 98.38\% & 101.50\% \\
    2 & 99.12\% & 98.62\% & 97.88\% & 103.75\% \\
    4 & 97.88\% & 96.88\% & 94.75\% & 101.88\% \\
    8 & 95.88\% & 93.75\% & 89.50\% & 96.62\% \\
    16 & 92.12\% & 88.75\% & 82.12\% & 87.25\% \\
    32 & 87.12\% & 82.62\% & 74.88\% & 78.00\% \\
    64 & 81.25\% & 76.12\% & 69.00\% & 70.75\% \\
    128 & 75.50\% & 70.75\% & 65.50\% & 66.50\% \\
    256 & 70.75\% & 67.12\% & 63.62\% & 64.50\% \\
    512 & 67.75\% & 65.25\% & 63.00\% & 63.62\% \\
    1\,024 & 66.38\% & 64.38\% & 62.75\% & 63.38\% \\
    2\,048 & 66.00\% & 64.25\% & 62.75\% & 63.25\% \\
    \bottomrule
    \end{tabular}
  }
\end{table}

\subsubsection{Beyond a Single Global $\alpha$}
\label{subsec:dynamic-alpha}

A natural question is whether a single global weight $\alpha$ is too restrictive when the data distribution shifts in complex, context-dependent ways. We explored a dynamic alternative by predicting a per-token $\alpha$ with a small MLP placed on top of the frozen backbone's latent representation (the base model and expert remain frozen; only the MLP is trained). This delivered consistent and modest gains over a single global $\alpha$.

\begin{table}[!htb]
  \centering
  \caption{Compression performance comparison between fixed and dynamic weight approaches.}
  \label{tab:multi_experts_compression_results}
  \resizebox{\linewidth}{!}{%
    \begin{tabular}{lcccc}
      \toprule
      \textbf{Model} & \textbf{math} & \textbf{code} & \textbf{shakespeare} & \textbf{enwik8*} \\
      \midrule
      Transformer 200K & 56.25\% & 65.67\% & 44.04\% & 31.59\% \\
      Transformer 200K + single $\alpha$ & 50.95\% & 53.94\% & 42.12\% & 31.58\% \\
      Transformer 200K + per-token $\alpha$ & \textbf{49.23\%} & \textbf{51.39\%} & \textbf{41.56\%} & \textbf{31.54\%} \\
      \bottomrule
    \end{tabular}%
  }
\end{table}

Although a context-aware $\alpha$ could offer finer control, our choice of a single scalar reflects a deliberate trade-off: while it has lower capacity than full finetuning, it is appealing in resource-constrained and real-time settings (e.g., limited-memory GPUs or quantized models that cannot be finetuned). Moreover, much of the expressive power in our approach comes from the additional expert itself, which offsets the simplicity of a global $\alpha$.

\end{document}